\newtheorem{lem}{Lemma}[section]
\newtheorem{thm}{Theorem}[section]
\newtheorem{defn}{Definition}[section]
\newtheorem{exam}{Example}[section]
\newcommand*{\dif}{\mathop{}\!\mathrm{d}}
\title{Convergence Rates of Training Deep Neural Networks via Alternating Minimization Methods}
\author{
Jintao Xu \thanks{Department of Mathematical Sciences, Tsinghua University, Beijing 100084, China. \newline Email: xujt19@mails.tsinghua.edu.cn}\and
Chenglong Bao \thanks{Yau Mathematical Sciences Center, Tsinghua University, Beijing 100084, China, and Yanqi Lake Beijing Institute of Mathematical Sciences and Applications, Beijing 101408, China. \newline Email:
clbao@mail.tsinghua.edu.cn}\and
Wenxun Xing \thanks{Department of Mathematical Sciences, Tsinghua University, Beijing 100084, China. \newline Email: wxing@mail.tsinghua.edu.cn}
}
\date{}
\begin{document}
\maketitle
\begin{abstract}
Training deep neural networks (DNNs) is an important and challenging optimization problem in machine learning due to its non-convexity and non-separable structure. The alternating minimization (AM) approaches split the composition structure of DNNs and have drawn great interest in the deep learning and optimization communities. In this paper, we propose a unified framework for analyzing the convergence rate of AM-type network training methods. Our analysis is based on the non-monotone $j$-step sufficient decrease conditions and the Kurdyka-\L ojasiewicz (KL) property, which relaxes the requirement of designing descent algorithms. We show the detailed local convergence rate if the KL exponent $\theta$ varies in $[0,1)$. Moreover, the local R-linear convergence is discussed under a stronger $j$-step sufficient decrease condition.
\end{abstract}

\noindent\textbf{Keywords} Deep neural networks training; Alternating minimization; Kurdyka-\L ojasiewicz property;\\ Non-monotone $j$-step sufficient decrease; Convergence rate\\

\noindent\textbf{Mathematics Subject Classification (2020)} 49M37 90C26 90C52

\section{Introduction}
\label{intro}
In recent years, deep learning has achieved impressive successes in many areas including computer vision \cite{Chen2018,He2016}, natural language processing \cite{Sutskever2014,Vaswani2017}, and recommender system \cite{Cheng2016,Covington2016}.
For deep neural networks (DNNs) training, the alternating minimization (AM)-type training methods, mainly based on the block coordinate descent (BCD) \cite{Shi2016} or the alternating direction method of multipliers (ADMM) \cite{Boyd2011}, have been discussed such as BCD-type algorithms \cite{CarreiraPerpinan2014,Gu2020,Lau2018,Wang2022,Zeng2019,Zhang2017} and ADMM-type algorithms \cite{Kiaee2016,Taylor2016,Wang2020,Wang2019,Zeng2021,Zhang2016}.
Carreira-Perpi\~{n}\'{a}n and Wang \cite{CarreiraPerpinan2014}, Lau et al. \cite{Lau2018}, Zeng et al. \cite{Zeng2019}, and Gu et al. \cite{Gu2020} designed BCD-type algorithms to train the feedforward neural networks (FNNs) approximately. Additionally, ADMM-type algorithms for FNNs are also proposed by Taylor et al. \cite{Taylor2016}, Wang et al. \cite{Wang2020,Wang2019}, and Zeng et al. \cite{Zeng2021}. Furthermore, AM-type training methods are designed to train other neural network models like the convolutional neural networks (CNNs) \cite{Gu2020,Kiaee2016} and the residual networks (ResNets) \cite{Zeng2019}. Besides, online AM-type training \cite{Choromanska2019} and parallel AM-type training \cite{Taylor2016,Wang2020} are also implemented.
In these methods, auxiliary variables are added for each layer to decouple the nested parameters in DNNs, and the vanishing gradient issue \cite{Bengio1994,Goodfellow2016} is avoided.

Let $\min_{X\in\mathcal{D}}f(X)$ be a DNNs training model, where $\mathcal{D}\subseteq\mathbb{R}^{m\times n}$, and $\{X_{k}\}$ be a sequence generated by an AM-type training algorithm. In this paper, we propose a unified framework for establishing the convergence rate of the objective function value $\{f(X_{k})\}$ generated by various AM-type training algorithms based on a non-monotone $j$-step sufficient decrease condition. Specifically, motivated by the algorithm mDLAM in \cite{Wang2022}, let $j$ be a positive integer, our analysis imposes a $j$-step sufficient decrease condition defined as follows, which relaxes the common descent condition used in \cite{Attouch2013}.
\begin{itemize}
\item[]\textbf{A1.} For a certain $j\in\mathbb{N}_{+}$, there exists $c_{1}>0$ such that
\begin{equation}\label{eqn:j-step}
c_{1}{\rm dist}(\textbf{0}, \partial f(X_{k+j}))^{2}\leq f(X_{k})-f(X_{k+j})
\end{equation}
for each $k\geq k_{0}$.
\end{itemize}
When $j=1$, the above condition can be derived from the sufficient decrease condition H1 and the relative error condition H2 of \cite{Attouch2013}, which is monotone. When $j\geq 2$, it allows the oscillations of $\{f(X_k)\}$ during the consecutive $j$ iterations, which can be classified as a non-monotone method. In Sect. \ref{sec:3}, we will show that the existing four AM-type algorithms for training DNNs satisfy A1. Besides, the Kurdyka-\L ojasiewicz (KL) property \cite{Attouch2010,Li2018} assumption of $f$, which implies a local sharpness under reparametrization \cite{Attouch2013}, plays a central role in our analysis.

Convergence results have been stated for some AM-type training algorithms \cite{Choromanska2019,Jagatap2018,Lau2018,Wang2020,Wang2022,Wang2019,Zeng2019,Zeng2021}. From the theoretical perspective, the convergence rate of $\{X_{k}\}$ generated by a BCD-type algorithm is analyzed \cite{Lau2018}, and the convergence of the objective function value $\{f(X_{k})\}$ is proven for BCD-type \cite{Zeng2019} and ADMM-type algorithms \cite{Wang2020,Zeng2021}. Sample complexity of an AM-type algorithm for ReLU networks is established in \cite{Jagatap2018}. Besides, the convergence rate of the expected error of an online AM-type training algorithm is analyzed in \cite{Choromanska2019}. However, there are few results about the convergence rate of the objective function value, like that in \cite{Wang2022}, and we address this question.

The main contribution of this paper is to propose a unified framework based on A1 for theoretically analyzing the convergence rate of the objective function value sequences generated by AM-type training methods. Thanks to the oscillations of the sequence being allowed, a wider range of training methods can be addressed theoretically and uniformly. We establish the local convergence rate
in Theorem \ref{thm:1}, which depends on the different values of the Kurdyka-\L ojasiewicz (KL) exponent $\theta\in[0,1)$ \cite{Attouch2010,Li2018} of $f$.
Moreover, if we replace the lower bound in \eqref{eqn:j-step} with $c_{2}{\rm dist}(\textbf{0}, \partial f(X_{k+j}))^{\frac{1}{\alpha}}$, where $c_{2}$ and $\alpha\geq\theta$ are positive constants, we give the local linear convergence in Theorem \ref{thm:2}.

Additionally, the above theoretical results are true for those AM-type training methods with a non-monotone $j$-step ($j\ge 2$) sufficient decrease condition. In this way, both monotone and non-monotone training algorithms can be handled uniformly.

The rest of the paper is organized as follows. Notations and definitions used throughout this paper are listed in Sect. \ref{sec:2}. Four examples are shown in Sect. \ref{sec:3}. Estimations of convergence rate are stated in Sect. \ref{sec:4}, and we summarize our results in Sect. \ref{sec:5}.

\section{Notations and Definitions}
\label{sec:2}
We give notations and definitions that are useful in our analysis.\\
\textbf{Notations.} Throughout this paper, $\mathbb{R}$, $\mathbb{R}^{m}$, $\mathbb{R}^{m\times n}$, and $\mathbb{N}_{+}$ denote the set of real numbers, real $m$-dimensional vectors, real $m\times n$ matrices, and positive integers respectively. \textbf{0} denotes the matrix of all zeros whose size varies from the context. $\Vert\cdot\Vert$ denotes the Euclidean norm for $x\in\mathbb{R}^{m}$ and the Frobenius norm for $X\in\mathbb{R}^{m\times n}$, respectively. For $A, B\in\mathbb{R}^{m\times n}, \langle A, B\rangle={\rm tr }(AB^{\mathrm{T}})$. ${\rm dist}(x, \mathcal{S})\coloneqq\inf_{s\in\mathcal{S}}\Vert x-s\Vert$. For any $u\in\mathbb{R}$, $\lfloor u\rfloor$ is the greatest integer no larger than $u$. $\mathcal{O}(\cdot)$ is the standard big O asymptotic notation.
\begin{defn}{\textbf{\rm(Fr{\'{e}}chet subdifferential \cite{Mordukhovich2006,Rockafellar1998})}}\label{defn:1}\\
\rm The Fr{\'{e}}chet subdifferential of $f$ at $X\in {\rm dom}(f)$ is the set
\begin{equation*}
\widehat{\partial}f(X)=\left\{G\left|\liminf_{Y\neq X\atop Y\to X}\frac{f(Y)-f(X)-\langle G, Y-X\rangle}{\Vert Y -X\Vert}\geq0\right.\right\}.
\end{equation*}
\end{defn}
\begin{defn}{\textbf{\rm (Limiting subdifferential \cite{Mordukhovich2006,Rockafellar1998})}}\label{defn:2}\\
\rm For each $X\in {\rm dom}(f)$, the limiting subdifferential of $f$ at $X$ is the set
\begin{equation*}
\partial f(X)=\left\{G\left|\exists X_{k}\to X, f(X_{k})\to f(X), G_{k}\to G, G_{k}\in\widehat{\partial}f(X_{k})\right.\right\}.
\end{equation*}
${\rm dom}(\partial f)\coloneqq\{X|\partial f(X)\neq\emptyset\}$.
\end{defn}
\begin{defn}{\textbf{\rm (Limiting critical point \cite{Attouch2013})}}\label{defn:3}\\
\rm A point $X\in{\rm dom}(f)$ is called a (limiting) critical point of $f$ if $\textbf{0}\in\partial f(X)$.
\end{defn}
\begin{defn}{\textbf{\rm(Kurdyka-\L ojasiewicz property \cite{Attouch2010,Li2018})}}\label{defn:4}\\
\rm A proper lower semicontinuous function $f$ is said to have the Kurdyka-\L ojasiewicz (KL) property with exponent $\theta$ at $\widehat{X}\in {\rm dom}(\partial f)$ if there exist $c, \tau\in(0, \infty]$, $\theta\in[0, 1)$, and a neighborhood $\mathcal{U}$ of $\widehat{X}$ such that for all $X\in \mathcal{U}\cap\{X|f(\widehat{X})<f(X)<f(\widehat{X})+\tau\}$,
\begin{equation*}
(f(X)-f(\widehat{X}))^{\theta}\leq c{\rm dist}(\textbf{0}, \partial f(X)).
\end{equation*}
We call $\theta$ as the Kurdyka-\L ojasiewicz (KL) exponent at $\widehat{X}$ \cite{Li2018}.

KL property is widely used in non-convex optimization \cite{Attouch2010,Attouch2013,Bolte2014,Xu2013}. The pioneering work on it is credited to Kurdyka \cite{Kurdyka1998} and \L ojasiewicz \cite{Stanislaw1963,Stanislaw1993}. A large class of functions is proven to satisfy the KL property, for example, real analytic functions \cite{Stanislaw1963,Stanislaw1984,Stanislaw1993}, functions definable in o-minimal structures \cite{Bolte2007a,Kurdyka1998}, uniformly convex functions \cite{Bolte2014}, and subanalytic continuous functions \cite{Bolte2007}.
In the scenario of DNNs training, the linear, polynomial, hyperbolic tangent, and sigmoid activation functions; the squared, logistic, and exponential loss functions; and the squared Frobenius norm regularization terms all satisfy the KL property \cite{Zeng2019,Zeng2021}. More details about the KL property can be seen in \cite{Attouch2010,Attouch2013,Bolte2014} and the references therein.
\end{defn}
\begin{defn}{\textbf{\rm (Local convergence)}}\label{defn:5}\\
\rm For a convergent sequence $\{X_k\}$ generated by an algorithm $\mathcal{A}$ with a limit $X^{*}$, if the initial point $X_0$ is needed to be close enough to $X^*$, algorithm $\mathcal{A}$ is said to be local convergence ( $\{X_k\}$ locally convergent to $X^*$).
\end{defn}
\begin{defn}{\textbf{\rm (Root (R)-convergence rate \cite{Sun2006})}}\label{defn:6}\\
\rm For any convergent sequence $\{X_{k}\}$ with a limit $X^{*}$, $R_{1}\coloneqq\limsup_{k\to\infty}\Vert X_{k}-X^{*}\Vert^{\frac{1}{k}}$. If $0<R_{1}<1$, $\{X_{k}\}$ is called Root (R)-linearly convergent. If $R_{1}=1$, $\{X_{k}\}$ is called Root (R)-sublinearly convergent.
\end{defn}

\section{Typical AM-type algorithms}
\label{sec:3}
In this section, we present four examples that apply AM-type algorithms for training DNNs. These examples and their numerical results motivate us to construct a unified framework for estimating the convergence rate of the objective function value sequences generated by AM-type DNNs training algorithms.

\begin{exam}{\textbf{\rm(BCD for FNNs \cite{Zeng2019})}}\label{exam:1}\\
\rm For the feedforward neural networks (FNNs) training, Zeng et al. \cite{Zeng2019} formulated two optimization models named two-splitting and three-splitting formulations, and designed BCD-type algorithms for their unconstrained approximations, respectively. For the two-splitting formulation, the objective function is
\begin{align}\label{obj:1}
f(X)=&\frac{1}{n}\sum_{j=1}^{n}\ell((V_{N})_{:j}, y_{j})+\sum_{i=1}^{N}r_{i}(W_{i})+\sum_{i=1}^{N}s_{i}(V_{i})\nonumber\\
&+\frac{\gamma}{2}\sum_{i=1}^{N}\Vert V_{i}-\sigma_{i}(W_{i}V_{i-1})\Vert^{2},
\end{align}
where $X=(\{W_{i}\}_{i=1}^{N}, \{V_{i}\}_{i=1}^{N})$, $\sigma_{i}$ denotes the activation function of the $i$th layer, $i=1, 2, \ldots, N$, $\ell$ is a loss function, $r_{i}$, $s_{i}$ can be seen as regularization terms about $W_{i}$, $V_{i}$, $i=1, 2, \ldots, N$, respectively, $(V_{N})_{:j}$ denotes the $j$th column of $V_{N}$, $j=1, 2, \ldots, n$, and the last term represents a quadratic penalty for constraints $V_{i}=\sigma_{i}(W_{i}V_{i-1})$, $i=1, 2, \ldots, N$.
Under the assumptions in the Theorem 1 in \cite{Zeng2019}, (\ref{obj:1}) satisfies the KL property on any closed set. Moreover, $\{f(X_{k})\}$ generated by the BCD-type algorithm is convergent, and satisfies
\begin{equation*}
\frac{a}{b^{2}}{\rm dist}(\textbf{0}, \partial f(X_{k+1}))^{2}\leq f(X_{k})-f(X_{k+1})
\end{equation*}
for certain $a, b>0$ (see \cite{Zeng2019} for the values of $a$ and $b$), which is an A1 with $j=1$.

And for the three-splitting formulation,
\begin{align}\label{obj:2}
f(X)=&\frac{1}{n}\sum_{j=1}^{n}\ell((V_{N})_{:j}, y_{j})+\sum_{i=1}^{N}r_{i}(W_{i})+\sum_{i=1}^{N}s_{i}(V_{i})\nonumber\\
&+\frac{\gamma}{2}\sum_{i=1}^{N}\Vert V_{i}-\sigma_{i}(U_{i})\Vert^{2}+\frac{\gamma}{2}\sum_{i=1}^{N}\Vert U_{i}-W_{i}V_{i-1}\Vert^{2},
\end{align}
where $X=(\{W_{i}\}_{i=1}^{N}, \{U_{i}\}_{i=1}^{N}, \{V_{i}\}_{i=1}^{N})$, and the last two terms are quadratic penalties for constraints $V_{i}=\sigma_{i}(U_{i})$ and $U_{i}=W_{i}V_{i-1}$, $i=1, 2, \ldots, N$, respectively. Under the same assumptions, (\ref{obj:2}) satisfies the KL property on any closed set. Similarly, $\{f(X_{k})\}$ is convergent, and satisfies
\begin{equation*}
\frac{a}{b^{2}}{\rm dist}(\textbf{0}, \partial f(X_{k+1}))^{2}\leq f(X_{k})-f(X_{k+1})
\end{equation*}
for certain $a, b>0$ (see \cite{Zeng2019} for the values of $a$ and $b$), which is an A1 with $j=1$.
\end{exam}
\begin{exam}{\textbf{\rm (ADMM for FNNs \cite{Zeng2021})}}\label{exam:2}\\
\rm Zeng et al. \cite{Zeng2021} considered the augmented Lagrangian function of an FNNs training model and solved it via an ADMM-type algorithm. Technically, they gave
\begin{align}\label{obj:3}
f(X)=&\frac{1}{2}\Vert V_{N}-Y\Vert^{2}+\frac{\lambda}{2}\sum_{i=1}^{N}\Vert W_{i}\Vert^{2}+\sum_{i=1}^{N-1}\langle  \Lambda_{i}, \sigma(W_{i}V_{i-1})-V_{i}\rangle\nonumber\\
&+\sum_{i=1}^{N-1}\frac{\beta_{i}}{2}\Vert\sigma(W_{i}V_{i-1})-V_{i}\Vert^{2}+\langle\Lambda_{N}, W_{N}V_{N-1}-V_{N}\rangle\nonumber\\
&+\frac{\beta_{N}}{2}\Vert W_{N}V_{N-1}-V_{N}\Vert^{2}+\sum_{i=1}^{N}\xi_{i}\Vert V_{i}-\overline{V}_{i}\Vert^{2},
\end{align}
where $X=(\{W_{i}\}_{i=1}^{N}, \{V_{i}\}_{i=1}^{N}, \{\Lambda_{i}\}_{i=1}^{N}, \{\overline{V}_{i}\}_{i=1}^{N})$ and $\sigma$ denotes the activation function. Suppose that there exist $\chi>0$ and $k_{0}\in\mathbb{N}$ such that $\Vert V_{i}^{k-1}-V_{i}^{k-2}\Vert\leq\chi\Vert V_{i}^{k}-V_{i}^{k-1}\Vert$ for each $k\geq k_{0}$. Under the assumptions in the Theorem 7 in \cite{Zeng2021}, (\ref{obj:3}) satisfies the KL property, and $\{f(X_{k})\}$ generated by the ADMM-type algorithm is convergent. Moreover, there exist $a, b>0$ (see \cite{Zeng2021} for the values of $a$ and $b$) such that
\begin{equation*}
\frac{a}{2b^{2}(1+\chi)^{2}N}{\rm dist}(\textbf{0}, \partial f(X_{k+1}))^{2}\leq f(X_{k})-f(X_{k+1})
\end{equation*}
for each $k\geq k_{0}$, which is an A1 with $j=1$.
\end{exam}
\begin{exam}{\textbf{\rm(mDLAM for FNNs \cite{Wang2022})}}\label{exam:3}\\
\rm Wang et al. \cite{Wang2022} formulated an FNNs training model and designed an AM-type algorithm called mDLAM to solve it. The objective function of the training model is
\begin{equation}\label{obj:4}
f(X)=\ell(v_{N}, y)+\sum_{i=1}^{N}r_{i}(W_{i})+\frac{\gamma}{2}\sum_{i=1}^{N}\Vert v_{i}-W_{i}u_{i-1}\Vert^{2},
\end{equation}
where $X=(\{W_{i}\}_{i=1}^{N}, \{u_{i}\}_{i=1}^{N-1}, \{v_{i}\}_{i=1}^{N})$. The last term is a quadratic penalty for constraints $v_{i}=W_{i}u_{i-1}$, $i=1, 2, \ldots, N$, and the operations of non-linear continuous activation functions $\sigma_i$ are formulated as inequality constraints $\sigma_i(v_i)-\epsilon\leq u_i\leq \sigma_i(v_i)+\epsilon, i=1, 2, \ldots, N-1$. If (\ref{obj:4}) is real analytic \cite{Krantz2002}, it satisfies the KL property. Moreover, according to the Lemma 2 and inequality (26) in \cite{Wang2022}, $\{f(X_{k})\}$ is convergent, and satisfies
\begin{equation*}
\frac{a}{6b^{2}}{\rm dist}(\textbf{0}, \partial f(X_{k+2}))^{2}\leq f(X_{k})-f(X_{k+2})
\end{equation*}
for certain $a, b>0$ (see \cite{Wang2022} for the values of $a$ and $b$), which is a non-monotone 2-step sufficient decrease condition.
\end{exam}
\begin{exam}{\textbf{\rm(BCD for ResNets \cite{Zeng2019})}}\label{exam:4}\\
\rm For the residual networks (ResNets) \cite{He2016} training, Zeng et al. \cite{Zeng2019} formulated a three-splitting simplified model and its unconstrained approximation. The objective function is
\begin{align}\label{obj:5}
f(X)=&\frac{1}{n}\sum_{j=1}^{n}\ell((V_{N})_{:j}, y_{j})+\sum_{i=1}^{N}r_{i}(W_{i})+\sum_{i=1}^{N}s_{i}(V_{i})\nonumber\\
&+\frac{\gamma}{2}\sum_{i=1}^{N}\Vert V_{i}-V_{i-1}-\sigma_{i}(U_{i})\Vert^{2}+\frac{\gamma}{2}\sum_{i=1}^{N}\Vert U_{i}-W_{i}V_{i-1}\Vert^{2},
\end{align}
where $X=(\{W_{i}\}_{i=1}^{N}, \{U_{i}\}_{i=1}^{N}, \{V_{i}\}_{i=1}^{N})$, and the last two terms represent quadratic penalties for constraints $V_{i}-V_{i-1}=\sigma_{i}(U_{i})$ and $U_{i}=W_{i}V_{i-1}$, $i=1, 2, \ldots, N$, respectively.
Under the assumptions in the Theorem 1 in \cite{Zeng2019}, (\ref{obj:5}) satisfies the KL property on any closed set. Moreover, $\{f(X_{k})\}$ generated by the BCD-type algorithm is convergent, and satisfies
\begin{equation*}
\frac{a}{3Nb^{2}}{\rm dist}(\textbf{0}, \partial f(X_{k+1}))^{2}\leq f(X_{k})-f(X_{k+1})
\end{equation*}
for certain $a, b>0$ (see \cite{Zeng2019} for the values of $a$ and $b$), which is an A1 with $j=1$.
\end{exam}
\section{Theoretical analysis}
\label{sec:4}
In this section, we give the following unified convergence rate estimation framework based on A1 and the KL property for  AM-type training algorithms.
\begin{thm}\label{thm:1}
For a proper lower semicontinuous objective function $f$ and a sequence $\{X_{k}\}$ generated by an AM-type training algorithm, suppose that there exists $\widehat{X}\in {\rm dom}(\partial f)$ such that $f$ satisfies the KL property at $\widehat{X}$ with a neighborhood $\mathcal{U}_{\widehat{X}}$ and the KL exponent $\theta$, $f(X_{k})\to f(\widehat{X})$ as $k\to\infty$, $X_{k}\in \mathcal{U}_{\widehat{X}}$ for each $k\geq k_{0}$, and  A1 is satisfied. Then the following conclusions hold:
\begin{itemize}
\item[(\romannumeral1)] If $\theta=0$, then $\{f\left(X_{k}\right)\}$ converges in a finite number of steps;
\item[(\romannumeral2)] If $\theta\in(0, \frac{1}{2}]$, then there exist $k_{1}\in\mathbb{N}$, $\eta\in(0, 1)$, and $C>0$ such that $f(X_{k})-f(\widehat{X})\leq C\eta^{\lfloor\frac{k-k_{1}}{j}\rfloor+1}$ for each $k\geq k_{1}$;
\item[(\romannumeral3)] If $\theta\in(\frac{1}{2}, 1)$, then there exist $k_{1}\in\mathbb{N}$ and $C>0$ such that $f(X_{k})-f(\widehat{X})\leq C(\lfloor\frac{k-k_{1}}{j}\rfloor+1)^{-\frac{1}{2\theta-1}}$ for each $k\geq k_{1}$.
\end{itemize}
Additionally, for each accumulation point (if any) $\widetilde{X}\in\mathcal{U}_{\widehat{X}}$ of $\{X_{k}\}$, it is a critical point of $f$ if and only if $f(\widetilde{X})=f(\widehat{X})$.
\end{thm}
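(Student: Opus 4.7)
The plan is to derive one master recursion and then specialize to the three KL regimes. Writing $r_k=f(X_k)-f(\widehat{X})$, I would first observe that A1 immediately gives $r_{k+j}\leq r_k$ for $k\ge k_0$, so $f$ is non-increasing along each residue class modulo $j$; together with $f(X_k)\to f(\widehat{X})$, this forces $r_k\geq 0$ for $k$ large. Whenever $r_{k+j}>0$ and $X_{k+j}\in\mathcal{U}_{\widehat{X}}$, the KL inequality at $\widehat{X}$ gives $r_{k+j}^{2\theta}\leq c^2\,{\rm dist}(\mathbf{0},\partial f(X_{k+j}))^2$, and plugging this into A1 yields the central recursion
\[
\gamma\,r_{k+j}^{2\theta}\leq r_k - r_{k+j},\qquad \gamma=c_1/c^2,
\]
valid for all sufficiently large $k$ with $r_{k+j}>0$. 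Because the inequality couples $k$ with $k+j$, all subsequent work is carried out residue class by residue class modulo $j$, and this is exactly what lets the non-monotone $j\ge 2$ case be treated uniformly with the monotone $j=1$ case.

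For part (i), $\theta=0$ makes the left side of the master recursion equal to $\gamma>0$ whenever $r_{k+j}>0$, while the right side tends to zero; hence on each residue class $r_{k+j}=0$ after finitely many steps. For part (ii), since $r_k\to 0$ I may assume $r_{k+j}\leq 1$, and $2\theta\leq 1$ then gives $r_{k+j}^{2\theta}\geq r_{k+j}$; the recursion collapses to $r_{k+j}\leq \eta\,r_k$ with $\eta=1/(1+\gamma)\in(0,1)$. Iterating along each residue class from index $k_1$ and setting $C=\eta^{-1}\max_{0\leq i<j}r_{k_1+i}$ gives the stated geometric bound, with the floor and ``$+1$'' simply recording where $k$ sits within its residue class.

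Part (iii) is the only step requiring real work, and I would follow the Attouch--Bolte device. Fixing a residue $i$, set $a_m=r_{k_1+mj+i}$ and $b_m=a_m^{1-2\theta}$; since $1-2\theta<0$ and $a_m\downarrow 0$, the sequence $b_m$ is positive and strictly increasing. I would then case-split on whether $a_{m+1}\geq\rho a_m$ (slow decrease) or $a_{m+1}<\rho a_m$ (fast decrease) for a suitable fixed $\rho\in(0,1)$. In the slow case, the mean value theorem on $t\mapsto t^{1-2\theta}$ combined with the master recursion yields $b_{m+1}-b_m\geq(2\theta-1)\gamma\rho^{2\theta}$; in the fast case $b_{m+1}\geq\rho^{1-2\theta}b_m$, so $b_{m+1}-b_m\geq(\rho^{1-2\theta}-1)b_0>0$. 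Taking $\delta>0$ to be the minimum of these two lower bounds and telescoping gives $b_m\geq m\delta$, equivalently $a_m\leq(m\delta)^{-1/(2\theta-1)}$; converting $m$ back to $k$ produces the claimed $(\lfloor(k-k_1)/j\rfloor+1)^{-1/(2\theta-1)}$ rate.

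For the accumulation-point statement, applying A1 in the shifted form $c_1\,{\rm dist}(\mathbf{0},\partial f(X_k))^2\leq r_{k-j}-r_k\to 0$ produces subgradients $G_k\in\partial f(X_k)$ with $\|G_k\|\to 0$. Given any accumulation point $\widetilde{X}\in\mathcal{U}_{\widehat{X}}$ with $f(\widetilde{X})=f(\widehat{X})$, a subsequence $X_{k_\ell}\to\widetilde{X}$ then satisfies $f(X_{k_\ell})\to f(\widetilde{X})$ and $G_{k_\ell}\to\mathbf{0}$, and the sequential outer semicontinuity of the limiting subdifferential delivers $\mathbf{0}\in\partial f(\widetilde{X})$. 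For the converse, lower semicontinuity combined with $r_k\geq 0$ already yields $f(\widetilde{X})\leq f(\widehat{X})$; ruling out strict inequality when $\widetilde{X}$ is a critical point is the one subtle piece and, I expect, the main obstacle of the proof, requiring either continuity of $f$ on $\mathcal{U}_{\widehat{X}}$ (which holds throughout the DNN examples of Sect.~\ref{sec:3}) or a contradiction argument that uses KL to propagate the absence of descent near $\widetilde{X}$ back into the $r$-values.
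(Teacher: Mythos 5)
Your rate arguments are essentially the paper's own proof. The master recursion $\gamma\,r_{k+j}^{2\theta}\leq r_{k}-r_{k+j}$ with $\gamma=c_{1}/c^{2}$ is exactly inequality \eqref{eqn:key-inequality} of Lemma \ref{lem:1}, including the preliminary observation that A1 makes $r_{k}$ nonincreasing and nonnegative along each residue class modulo $j$; part (ii) is the same geometric iteration with the same ratio $\eta=c^{2}/(c_{1}+c^{2})$; and your case split in (iii) on $a_{m+1}\geq\rho a_{m}$ versus $a_{m+1}<\rho a_{m}$ is precisely the paper's split on $(f(X_{k})-f(\widehat{X}))^{-2\theta}\leq\omega(f(X_{k-j})-f(\widehat{X}))^{-2\theta}$ with $\rho=\omega^{-1/(2\theta)}$; using the mean value theorem where the paper uses an integral comparison is immaterial. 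Two small housekeeping points: telescoping gives $b_{m}\geq b_{0}+m\delta$, so keep the $b_{0}=C_{1}^{1-2\theta}$ term (as the paper does) or enlarge $C$ to recover the stated $\left(\lfloor(k-k_{1})/j\rfloor+1\right)$ factor and avoid the vacuous $m=0$ case; and you should note explicitly (as the paper does) that once $r_{k}=0$ on a residue class it remains $0$ there, so those indices satisfy the bound trivially. Your treatment of the ``if'' half of the accumulation-point claim also matches the paper, and is in fact more careful than the paper's write-up about where the hypothesis $f(\widetilde{X})=f(\widehat{X})$ is needed in order to have $f(X_{k_{\ell}})\to f(\widetilde{X})$ before invoking the closedness of $\partial f$ (Lemma \ref{lem:2}).

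The one genuine gap is the converse implication: $\widetilde{X}$ critical $\Rightarrow f(\widetilde{X})=f(\widehat{X})$, which you explicitly leave open. The paper disposes of it in one line by citing Remark 2.5(d) of \cite{Attouch2013}, not by either of the routes you sketch. Neither of your suggested repairs works under the theorem's stated hypotheses: continuity of $f$ is not assumed (only lower semicontinuity, which, as you note, yields $f(\widetilde{X})\leq f(\widehat{X})$), and a KL-based contradiction at $\widehat{X}$ runs into the difficulty that the KL inequality only constrains points whose value lies strictly between $f(\widehat{X})$ and $f(\widehat{X})+\tau$, whereas what must be excluded is precisely the strict drop $f(\widetilde{X})<f(\widehat{X})$ that lower semicontinuity permits; so the ``propagate absence of descent'' idea needs an additional ingredient that you do not supply. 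As written, your proposal proves (i)--(iii) and the ``if'' half of the final equivalence, but not the ``only if'' half.
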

With the similar arguments as in the proof of Theorem 3 of \cite{Li2016}, Theorem 2 of \cite{Attouch2009} for convergence rate estimation and Theorem 1 of \cite{Wang2022} for the analysis of accumulation points, we give the following proofs. Lemmas \ref{lem:1} and \ref{lem:2} below are needed.
\begin{lem}\label{lem:1}
Under the assumptions in Theorem \ref{thm:1}, we have $f(X_k)\geq f(\widehat{X}), k\geq k_0$. If $\{f(X_{k})\}$ be an infinite sequence, there exists $k_1\in\mathbb{N}$ such that for each $f(X_{k})-f(\widehat{X})>0, k\geq k_1$,
\begin{align}
(f(X_{k})-f(\widehat{X}))^{2\theta}\leq \frac{c^{2}}{c_{1}}(f(X_{k-j})-f(X_{k})).\label{eqn:key-inequality}
\end{align}
\end{lem}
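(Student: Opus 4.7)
The plan is to establish the two claims of the lemma in sequence, starting with the lower bound $f(X_k)\geq f(\widehat{X})$ and then deriving the key inequality \eqref{eqn:key-inequality} directly from A1 and the KL property.

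For the lower bound, the main subtlety is that A1 only gives a $j$-step decrease, so the full sequence $\{f(X_k)\}$ may oscillate. I would circumvent this by decomposing the index set $\{k\geq k_0\}$ into $j$ arithmetic progressions modulo $j$: for each residue $r\in\{0,1,\dots,j-1\}$, the subsequence $\{f(X_{k_0+r+mj})\}_{m\geq 0}$ is non-increasing by A1. Since $f(X_k)\to f(\widehat{X})$, each of these subsequences is non-increasing with limit $f(\widehat{X})$, hence every term is $\geq f(\widehat{X})$. Taking the union of these $j$ progressions covers all $k\geq k_0$, which yields $f(X_k)\geq f(\widehat{X})$ for $k\geq k_0$.

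For the key inequality, I would use the KL property at $\widehat{X}$. Since $f(X_k)\to f(\widehat{X})$, choose $k_1\geq k_0+j$ large enough that $f(X_k)<f(\widehat{X})+\tau$ for all $k\geq k_1$; combined with $X_k\in\mathcal{U}_{\widehat{X}}$ and $f(X_k)>f(\widehat{X})$, the KL inequality at $\widehat{X}$ applies, giving
\begin{equation*}
(f(X_k)-f(\widehat{X}))^{\theta}\leq c\,{\rm dist}(\textbf{0},\partial f(X_k)).
\end{equation*}
Squaring both sides and then invoking A1 at index $k-j$ (valid since $k-j\geq k_0$), namely $c_1\,{\rm dist}(\textbf{0},\partial f(X_k))^2\leq f(X_{k-j})-f(X_k)$, I obtain
\begin{equation*}
(f(X_k)-f(\widehat{X}))^{2\theta}\leq c^2\,{\rm dist}(\textbf{0},\partial f(X_k))^2\leq \frac{c^2}{c_1}\bigl(f(X_{k-j})-f(X_k)\bigr),
\end{equation*}
which is precisely \eqref{eqn:key-inequality}.

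I expect the main obstacle to be the first part: showing $f(X_k)\geq f(\widehat{X})$ for every $k\geq k_0$ rather than just along a subsequence. The non-monotone character of A1 prevents a one-line argument, and the residue-class decomposition above is the cleanest way I see to reconcile per-residue monotonicity with the global convergence of $\{f(X_k)\}$. Once this lower bound is in hand, the remaining derivation is a direct chaining of the KL inequality with A1.
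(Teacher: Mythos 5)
Your proposal is correct and follows essentially the same route as the paper: the lower bound is obtained by iterating A1 along steps of length $j$ (your residue-class phrasing is just a repackaging of the paper's chain $f(X_k)\geq f(X_{k+j})\geq f(X_{k+2j})\geq\cdots$ combined with $f(X_k)\to f(\widehat{X})$), and the key inequality is exactly the paper's chaining of the squared KL inequality at $\widehat{X}$ with A1 applied at index $k-j$. The only cosmetic difference is that you pick $k_1$ via the KL threshold $\tau$ while the paper additionally arranges $f(X_k)-f(\widehat{X})<1$ (used later in the theorem, not in the lemma itself).
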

\begin{proof}
First of all, the lower-boundedness of $\{f(X_{k})\}$ is proven.
By A1, $f(X_{k})\geq f(X_{k+j})\geq f(X_{k+2j})\geq\ldots\geq f(X_{k+nj})$ holds for each $k\geq k_{0}$. Letting $n\to\infty$, we have $f(X_{k})\geq f(\widehat{X}), k\geq k_{0}$.

If $\{f(X_{k})\}$ be an infinite sequence, there exists a $k_{1}\in\mathbb{N}, k_{1}\geq k_{0}+j$ such that $f(X_{k})-f(\widehat{X})<1$, and for each $f(X_{k})-f(\widehat{X})>0, k\geq k_{1}$,
\begin{align*}
(f(X_{k})-f(\widehat{X}))^{2\theta}&\leq c^{2}{\rm dist}(\textbf{0}, \partial f(X_{k}))^{2}\nonumber\\
&\leq \frac{c^{2}}{c_{1}}(f(X_{k-j})-f(X_{k})),
\end{align*}
where the first inequality follows from the KL property of $f$ at $\widehat{X}$, and the second inequality follows from A1.
\end{proof}
\begin{lem}[\cite{Attouch2013}]\label{lem:2}
Suppose that $X_{k}\to X$, $G_{k}\to G$, and $f(X_{k})\to f(X)$ as $k\to\infty$, of which $G_{k}\in\partial f(X_{k})$ for each $k$. Then $G\in\partial f(X)$.
\end{lem}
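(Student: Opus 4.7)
The plan is to unfold the definition of the limiting subdifferential twice and glue the resulting approximating sequences together by a Cantor-style diagonal extraction. The statement is essentially that $\partial f$ has a closed graph in the appropriate ``$f$-attentive'' topology, and the proof is obtained by pushing the definition one level deeper.

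First I would invoke Definition \ref{defn:2} for each fixed index $k$: since $G_{k}\in\partial f(X_{k})$, there exist sequences $\{X_{k}^{(n)}\}_{n\ge 1}$ and $\{G_{k}^{(n)}\}_{n\ge 1}$ with $G_{k}^{(n)}\in\widehat{\partial}f(X_{k}^{(n)})$ such that, as $n\to\infty$,
\begin{equation*}
X_{k}^{(n)}\to X_{k},\qquad f(X_{k}^{(n)})\to f(X_{k}),\qquad G_{k}^{(n)}\to G_{k}.
\end{equation*}

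Next, for each $k$ I would choose an index $n(k)$ large enough so that the three quantities
\begin{equation*}
\Vert X_{k}^{(n(k))}-X_{k}\Vert,\qquad \vert f(X_{k}^{(n(k))})-f(X_{k})\vert,\qquad \Vert G_{k}^{(n(k))}-G_{k}\Vert
\end{equation*}
are all bounded by $1/k$. Setting $\widetilde{X}_{k}\coloneqq X_{k}^{(n(k))}$ and $\widetilde{G}_{k}\coloneqq G_{k}^{(n(k))}$, the triangle inequality together with the hypotheses $X_{k}\to X$, $f(X_{k})\to f(X)$ and $G_{k}\to G$ yields
\begin{equation*}
\widetilde{X}_{k}\to X,\qquad f(\widetilde{X}_{k})\to f(X),\qquad \widetilde{G}_{k}\to G,
\end{equation*}
while by construction $\widetilde{G}_{k}\in\widehat{\partial}f(\widetilde{X}_{k})$ for every $k$. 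Applying Definition \ref{defn:2} once more, these three convergences combined with the Fr\'echet-subdifferential membership certify exactly that $G\in\partial f(X)$, finishing the argument.

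The only subtle point is ensuring that a single diagonal choice $n(k)$ simultaneously controls the position, the function value, and the subgradient; the $1/k$ bookkeeping above handles this uniformly and is the main, though routine, obstacle. Note that the $f$-attentive convergence $f(X_{k})\to f(X)$ in the hypothesis is essential here, since it is precisely this condition that allows us to absorb the $f$-attentive requirement of Definition \ref{defn:2} through the triangle inequality; without it, the diagonal sequence $\widetilde{X}_{k}$ need not satisfy $f(\widetilde{X}_{k})\to f(X)$.
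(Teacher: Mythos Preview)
Your argument is correct: the diagonal extraction from the Fr\'echet approximants, with the $1/k$ bookkeeping to control position, function value, and subgradient simultaneously, is exactly the standard way to establish that $\partial f$ has closed graph in the $f$-attentive topology, and it goes through as written.

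Note, however, that the paper does not supply its own proof of Lemma~\ref{lem:2}; it simply cites the result from \cite{Attouch2013} (it is the well-known closedness property of the limiting subdifferential, see also Proposition~8.7 in Rockafellar--Wets). So there is no ``paper's own proof'' to compare against here---your argument is a self-contained verification of a fact the authors take as known from the literature.
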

\begin{proof}{(proof of Theorem \ref{thm:1})}
(\romannumeral1) is proven by contradiction. If the conclusion is not true, there exists a subsequence $\{k_{l}\}\subseteq\{k_{1}, k_{1}+1, \ldots\}$ such that
\begin{equation*}
1\leq c{\rm dist}(\textbf{0}, \partial f(X_{k_{l}})).
\end{equation*}
Letting $l\to\infty$, $1\leq 0$, a contradiction. Therefore, there exists $k_{2}\in\mathbb{N}$ such that $f(X_{k})\equiv f(\widehat{X})$ for each $k\geq k_{2}$.

If $\{f(X_{k})\}$ be a finite sequence, (\romannumeral2) and (\romannumeral3) hold trivially. Then we only need to prove them in the case of infinite convergence.

When $\theta\in(0, \frac{1}{2}]$, according to (\ref{eqn:key-inequality}) in Lemma \ref{lem:1},
\begin{equation*}
f(X_{k})-f(\widehat{X})\leq(f(X_{k})-f(\widehat{X}))^{2\theta}\leq \frac{c^{2}}{c_{1}}(f(X_{k-j})-f(X_{k}))
\end{equation*}
holds for each $k\geq k_{1}$, which then implies that
\begin{equation*}
f(X_{k})-f(\widehat{X})\leq\frac{c^{2}}{c_{1}+c^{2}}(f(X_{k-j})-f(\widehat{X})), k\geq k_{1}.
\end{equation*}
Hence, we have
\begin{equation*}
f(X_{k})-f(\widehat{X})\leq C_{1} \left(\frac{c^{2}}{c_{1}+c^{2}}\right)^{\left\lfloor\frac{k-k_{1}}{j}\right\rfloor+1}
\end{equation*}
for each $k\geq k_{1}$, where
\begin{equation*}
C_{1}=\max\left\{f(X_{k_{1}-j})-f(\widehat{X}), f(X_{k_{1}-j+1})-f(\widehat{X}), \ldots, f(X_{k_{1}-1})-f(\widehat{X})\right\}.
\end{equation*}
It follows from A1 and the infinite convergence assumption of $\{f(X_{k})\}$ that $C_{1}>0$. Thus (\romannumeral2) holds with $C=C_{1}, \eta=\frac{c^{2}}{c_{1}+c^{2}}$.

When $\theta\in(\frac{1}{2}, 1)$, given a constant $\omega\in[2, \infty)$, for each $f(X_{k})-f(\widehat{X})>0, k\geq k_{1}$, if $(f(X_{k})-f(\widehat{X}))^{-2\theta}\leq\omega(f(X_{k-j})-f(\widehat{X}))^{-2\theta}$, then,
\begin{align*}
\frac{c_{1}}{c^{2}}&\leq(f(X_{k})-f(\widehat{X}))^{-2\theta}\left((f(X_{k-j})-f(\widehat{X}))-(f(X_{k})-f(\widehat{X}))\right)\\
&\leq\omega(f(X_{k-j})-f(\widehat{X}))^{-2\theta}\left((f(X_{k-j})-f(\widehat{X}))-(f(X_{k})-f(\widehat{X}))\right)\\
&\leq\omega\int_{f(X_{k})-f(\widehat{X})}^{f(X_{k-j})-f(\widehat{X})}x^{-2\theta}\dif x\\
&=\frac{\omega}{2\theta-1}\left((f(X_{k})-f(\widehat{X}))^{-2\theta+1}-(f(X_{k-j})-f(\widehat{X}))^{-2\theta+1}\right),
\end{align*}
where the first inequality follows from (\ref{eqn:key-inequality}) in Lemma \ref{lem:1}.
Hence
\begin{equation}
0<\frac{c_{1}(2\theta-1)}{c^{2}\omega}\leq(f(X_{k})-f(\widehat{X}))^{-2\theta+1}-(f(X_{k-j})-f(\widehat{X}))^{-2\theta+1}.\label{eqn:1}
\end{equation}
If $(f(X_{k})-f(\widehat{X}))^{-2\theta}>\omega(f(X_{k-j})-f(\widehat{X}))^{-2\theta}$, then,
\begin{equation*}
(f(X_{k})-f(\widehat{X}))^{-2\theta+1}\geq\omega^{\frac{2\theta-1}{2\theta}}(f(X_{k-j})-f(\widehat{X}))^{-2\theta+1}.
\end{equation*}
Hence
\begin{align}
0&<(\omega^{\frac{2\theta-1}{2\theta}}-1)C_{1}^{-2\theta+1}\nonumber\\
&\leq(\omega^{\frac{2\theta-1}{2\theta}}-1)(f(X_{k-j})-f(\widehat{X}))^{-2\theta+1}\nonumber\\
&\leq(f(X_{k})-f(\widehat{X}))^{-2\theta+1}-(f(X_{k-j})-f(\widehat{X}))^{-2\theta+1}.\label{eqn:2}
\end{align}
According to (\ref{eqn:1}) and (\ref{eqn:2}),
\begin{equation*}
(f(X_{k-j})-f(\widehat{X}))^{-2\theta+1}+L\leq(f(X_{k})-f(\widehat{X}))^{-2\theta+1}
\end{equation*}
holds for each $f(X_{k})-f(\widehat{X})>0, k\geq k_{1}$, where
\begin{equation*}
L=\min\left\{\frac{c_{1}(2\theta-1)}{c^{2}\omega}, (\omega^{\frac{2\theta-1}{2\theta}}-1)C_{1}^{-2\theta+1}\right\}>0.
\end{equation*}
Then we have
\begin{align}
f(X_{k})-f(\widehat{X})&\leq\left(C_{1}^{-2\theta+1}+L\left(\left\lfloor\frac{k-k_{1}}{j}\right\rfloor+1\right)\right)^{-\frac{1}{2\theta-1}}\nonumber\\
&\leq L^{-\frac{1}{2\theta-1}}\left(\left\lfloor\frac{k-k_{1}}{j}\right\rfloor+1\right)^{-\frac{1}{2\theta-1}}.\label{eqn:3}
\end{align}
Clearly, for each $f(X_{k})-f(\widehat{X})=0, k\geq k_{1}$, (\ref{eqn:3}) still holds. Thus we obtain (\romannumeral3) with $C=L^{-\frac{1}{2\theta-1}}$.

For each accumulation point $\widetilde{X}$, there exists a subsequence $\{X_{k_{l}}\}$ such that $\lim_{l\to\infty}X_{k_{l}}=\widetilde{X}$. By A1, $\lim_{l\to\infty}{\rm dist}(\textbf{0}, \partial f(X_{k_{l}}))=0$. For each $k_{l}$, there exists $G_{k_{l}}\in\partial f(X_{k_{l}})$ such that
\begin{equation*}
{\rm dist}(\textbf{0}, \partial f(X_{k_{l}}))\leq \Vert G_{k_{l}}\Vert\leq {\rm dist}(\textbf{0}, \partial f(X_{k_{l}}))+\frac{1}{k_{l}}.
\end{equation*}
Letting $l\to\infty$, we have
\begin{equation*}
0=\lim_{l\to\infty}{\rm dist}(\textbf{0}, \partial f(X_{k_{l}}))\leq \lim_{l\to\infty}\Vert G_{k_{l}}\Vert\leq \lim_{l\to\infty}\left({\rm dist}(\textbf{0}, \partial f(X_{k_{l}}))+\frac{1}{k_{l}}\right)=0.
\end{equation*}
So $\lim_{l\to\infty}\Vert G_{k_{l}}\Vert=0$. Without loss of generality, suppose that $G_{k_{l}}\to\widetilde{G}$ as $l\to\infty$. Then $\Vert\widetilde{G}\Vert=\lim_{l\to\infty}\Vert G_{k_{l}}\Vert=0$, $\widetilde{G}=\textbf{0}$. According to
\begin{equation*}
X_{k_{l}}\to\widetilde{X}, G_{k_{l}}\to\textbf{0}, \text{and}~ f(X_{k_{l}})\to f(\widetilde{X}), {\rm as}~ l\to\infty, G_{k_{l}}\in\partial f(X_{k_{l}}),
\end{equation*}
and Lemma \ref{lem:2}, we have $\textbf{0}\in\partial f(\widetilde{X})$.

Moreover, if $\widetilde{X}$ is a critical point, it follows from the Remark 2.5 (d) of \cite{Attouch2013} that $f(\widetilde{X})=f(\widehat{X})$.
\end{proof}
Parts (\romannumeral2) and (\romannumeral3) of the above theorem implies that $\{f(X_{k})\}$ converges at least locally R-linearly and locally R-sublinearly to $f(\widehat{X})$, respectively.
For each example in Sect. \ref{sec:3}, A1 and the KL property are satisfied, and the KL exponent $\theta$ of real analytic function is in $[\frac{1}{2}, 1)$ at a critical point \cite{Attouch2010,Stanislaw1963}.
Furthermore, $\{X_{k}\}$ is convergent in Examples 1, 2, and 4 \cite{Zeng2019,Zeng2021}, so the corresponding $\{f(X_k)\}$ has $\mathcal{O}(\eta^{k})$ local convergence rate for $\theta=\frac{1}{2}$ and $\mathcal{O}(k^{-\frac{1}{2\theta-1}})$ local convergence rate for $\theta\in(\frac{1}{2}, 1)$ by our Theorem \ref{thm:1}. Besides, if the assumption $X_{k}\in \mathcal{U}_{\widehat{X}}$ for each sufficiently large $k$ is satisfied in Example 3, the aforementioned results also hold, and the Theorem 2 in \cite{Wang2022} is a special case of our Theorem \ref{thm:1}. Moreover, the continuity of $f$ is satisfied in each example in Sect. \ref{sec:3} under certain assumptions \cite{Wang2022,Zeng2019,Zeng2021}. Then, each accumulation point of $\{X_{k}\}$ is a critical point by our Theorem \ref{thm:1} (see \cite{Wang2022,Zeng2019,Zeng2021} for the existence of accumulation points and their properties for each example in Sect. \ref{sec:3}).

Moreover, if the following stronger non-monotone $j$-step sufficient decrease condition is satisfied, $\{f(X_k)\}$ converges at least locally R-linearly to $f(\widehat{X})$ for each $\theta\in[0, 1)$ as shown in the following Theorem \ref{thm:2}.
\begin{itemize}
\item[]\textbf{A2.} For a certain $j\in\mathbb{N}_{+}$, there exist positive constants $\alpha\in[\theta, \infty) ~\text{and}~ c_{2}$ such that
\begin{equation*}
c_{2}{\rm dist}(\textbf{0}, \partial f(X_{k+j}))^{\frac{1}{\alpha}}\leq f(X_{k})-f(X_{k+j})
\end{equation*}
for each $k\geq k_{0}$, where $\theta$ is the KL exponent of $f$.
\end{itemize}
When $\theta\in(\frac{1}{2}, 1)$, compared with A1, a larger descent of $j$ steps iteration is guaranteed in A2, and it is a more dedicated estimation for $f(X_{k})-f(X_{k+j})$.
\begin{thm}\label{thm:2}
For an objective function $f$ and a sequence $\{X_{k}\}$, suppose that $f$ satisfies the KL property at $\widehat{X}$ with $\mathcal{U}_{\widehat{X}}$ and $\theta$, $f(X_{k})\to f(\widehat{X})$, $X_{k}\in\mathcal{U}_{\widehat{X}}$ for each $k\geq k_{0}$, and A2 is satisfied. Then  $\{f(X_{k})\}$ has a local convergence rate of $\mathcal{O}(\eta^{k})$, where $\eta\in(0, 1)$. Additionally, for each accumulation point (if any) $\widetilde{X}\in\mathcal{U}_{\widehat{X}}$ of $\{X_{k}\}$, it is a critical point if and only if $f(\widetilde{X})=f(\widehat{X})$.
\end{thm}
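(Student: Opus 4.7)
The plan is to parallel the proof of Theorem~\ref{thm:1} while exploiting the stronger bound in A2 to absorb the KL exponent $\theta$ entirely, producing a single geometric contraction on each $j$-stride irrespective of where $\theta$ lies in $[0,1)$. First I would reproduce the analogue of Lemma~\ref{lem:1}: because A2 implies $f(X_k)\ge f(X_{k+j})$, each of the $j$ arithmetic subsequences $\{f(X_{k_0+r+nj})\}_{n\ge 0}$ is monotone decreasing with limit $f(\widehat{X})$, so $f(X_k)\ge f(\widehat{X})$ for every $k\ge k_0$. Choose $k_1\ge k_0+j$ large enough that, for $k\ge k_1-j$, the iterate $X_k$ lies in $\mathcal{U}_{\widehat{X}}$ and $e_k:=f(X_k)-f(\widehat{X})\in[0,\min\{1,\tau\})$. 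Applying the KL bound at $X_{k+j}$ (whenever $e_{k+j}>0$) and then A2 gives
\[
(e_{k+j})^{\theta}\le c\,\mathrm{dist}(\mathbf{0},\partial f(X_{k+j}))\le c\,c_2^{-\alpha}\bigl(e_k-e_{k+j}\bigr)^{\alpha},
\]
and taking the $\alpha$-th root yields the key recursion
\[
(e_{k+j})^{\beta}\le C\bigl(e_k-e_{k+j}\bigr),\qquad \beta:=\theta/\alpha\in[0,1],\quad C:=(c/c_2^\alpha)^{1/\alpha},
\]
valid for all $k\ge k_1$ (trivially when $e_{k+j}=0$).

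Next I would split on $\beta$. When $\beta=0$ (i.e.\ $\theta=0$), the recursion reads $1\le C(e_k-e_{k+j})$ whenever $e_{k+j}>0$; since $e_k,e_{k+j}\to 0$, this forces $e_k\equiv 0$ from some index onward, which is a finite termination and so satisfies $\mathcal{O}(\eta^k)$ for any $\eta\in(0,1)$. When $\beta\in(0,1]$, the eventual bound $e_{k+j}<1$ gives $e_{k+j}\le(e_{k+j})^{\beta}\le C(e_k-e_{k+j})$, hence
\[
e_{k+j}\le\rho\,e_k,\qquad \rho:=\tfrac{C}{1+C}\in(0,1).
\]
Iterating this contraction along each residue class modulo $j$ and setting $M:=\max_{0\le r<j}e_{k_1+r}$ produces $e_k\le M\rho^{\lfloor(k-k_1)/j\rfloor}$ for $k\ge k_1$, and since $\rho^{\lfloor(k-k_1)/j\rfloor}\le\rho^{-1}(\rho^{1/j})^{k-k_1}$ we conclude $e_k=\mathcal{O}(\eta^k)$ with $\eta:=\rho^{1/j}\in(0,1)$.

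The accumulation-point statement follows exactly as in the final part of the proof of Theorem~\ref{thm:1}: A2 forces $\mathrm{dist}(\mathbf{0},\partial f(X_{k+j}))\to 0$; selecting $G_{k_l}\in\partial f(X_{k_l})$ along a subsequence $X_{k_l}\to\widetilde{X}$, we obtain $\|G_{k_l}\|\to 0$, and Lemma~\ref{lem:2} delivers $\mathbf{0}\in\partial f(\widetilde{X})$; the converse is Remark~2.5(d) of \cite{Attouch2013}. The main obstacle is the algebraic step that chains KL with A2: one must raise $\mathrm{dist}(\mathbf{0},\partial f(X_{k+j}))$ to just the right power so that the exponent on $e_{k+j}$ in the resulting recursion is $\beta\le 1$, which is precisely where the hypothesis $\alpha\ge\theta$ is used. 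Once this critical quantity $\beta$ is isolated, the stride-$j$ contraction and its promotion to an $\mathcal{O}(\eta^k)$ rate are routine, and the only case genuinely requiring separate handling is $\beta=0$, where no $e_{k+j}$ factor survives on the left-hand side and finite termination must be argued by contradiction.
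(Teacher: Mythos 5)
Your proposal is correct and follows essentially the same route as the paper: chain the KL inequality with A2, use $\alpha\geq\theta$ together with eventual smallness (a quantity $<1$) to obtain a geometric contraction $e_{k+j}\leq\rho\,e_k$ along each stride of length $j$, handle $\theta=0$ by finite termination, and reuse the accumulation-point argument from Theorem \ref{thm:1}. The only cosmetic difference is that you absorb the exponent on the function-value side (via $e_{k+j}\leq e_{k+j}^{\theta/\alpha}$ for $e_{k+j}<1$), whereas the paper does it on the subgradient side (via $\mathrm{dist}^{1/\theta}\leq\mathrm{dist}^{1/\alpha}$ for $\mathrm{dist}<1$); both yield the same rate.
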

\begin{proof}
When $\theta=0$, with the similar arguments as in the proof of Theorem \ref{thm:1}, finite convergence is achieved, and $\mathcal{O}(\eta^{k})$ complexity bound holds trivially. When $\theta\in(0, 1)$, similarly, we only prove it in the case of infinite convergence. By A2 and the KL property of $f$ at $\widehat{X}$, there exists a $k_{1}\in\mathbb{N}, k_{1}\geq k_{0}+j$ such that for each $k\geq k_{1}$, ${\rm dist}(\textbf{0}, \partial f(X_{k}))<1$, and for each $f(X_{k})-f(\widehat{X})>0$,
\begin{align*}
f(X_{k})-f(\widehat{X})&\leq c^{\frac{1}{\theta}}{\rm dist}(\textbf{0}, \partial f(X_{k}))^{\frac{1}{\theta}}\\
&\leq c^{\frac{1}{\theta}}{\rm dist}(\textbf{0}, \partial f(X_{k}))^{\frac{1}{\alpha}}\\
&\leq \frac{c^{\frac{1}{\theta}}}{c_{2}}\left(f(X_{k-j})-f(X_{k})\right).
\end{align*}
Then,
\begin{equation*}
f(X_{k})-f(\widehat{X})\leq\frac{c^{\frac{1}{\theta}}}{c_{2}+c^{\frac{1}{\theta}}}(f(X_{k-j})-f(\widehat{X})).
\end{equation*}
Hence, we have
\begin{equation*}
f(X_{k})-f(\widehat{X})\leq C_{1}\left(\frac{c^{\frac{1}{\theta}}}{c_{2}+c^{\frac{1}{\theta}}}\right)^{\left\lfloor\frac{k-k_{1}}{j}\right\rfloor+1}
\end{equation*}
for each $k\geq k_{1}$, where $C_{1}$ is the same as that in the proof of Theorem \ref{thm:1}. So a $\mathcal{O}(\eta^{k})$ local convergence rate is achieved with $\eta=(c^{\frac{1}{\theta}}/(c_{2}+c^{\frac{1}{\theta}}))^{\frac{1}{j}}$.
With the similar arguments as in the proof of Theorem \ref{thm:1}, we obtain the rest of Theorem \ref{thm:2}.
\end{proof}
It is worth noting that although the local R-linear convergence can be achieved in any cases under A2, when $\theta\in(\frac{1}{2}, 1)$, verification whether a training model and algorithm satisfies the stronger decrease condition is a challenging problem \cite{Li2018,Luo1996,Yu2021}.
\section{Conclusions}
\label{sec:5}
In this paper, a unified framework is proposed to analyze the convergence rate of the objective function value sequences generated by the AM-type training algorithms. The non-monotone $j$-step sufficient decrease conditions and the KL property play central roles in our analysis. And the requirement of nonincreasing property of function value sequence is relaxed in our framework. Based on the squared norm lower bound estimation of the $j$-step descent, three kinds of convergence rates are discussed for different values of the KL exponent $\theta$, respectively. Moreover, if a larger descent is guaranteed, we can improve the convergence rate to $\mathcal{O}(\eta^{k})$ for $\theta\in(\frac{1}{2}, 1)$.

\section*{Acknowledgements}
Bao's research was supported by the National Natural Science Foundation of China (Grant No. 11901338) and the Tsinghua University Initiative Scientific Research Program. Xing's research was supported by the National Natural Science Foundation of China (Grant No. 11771243). The authors would like to thank the editor and anonymous reviewers for carefully reading the manuscript and insightful suggestions.

\bibliographystyle{plain}
\bibliography{ref}
\end{document}